\documentclass[hidelinks,letter, 10 pt, conference,doublecolomn]{ieeeconf} 
\IEEEoverridecommandlockouts    
\overrideIEEEmargins 

\usepackage[dvipsnames]{xcolor}
\usepackage[T1]{fontenc}

\usepackage{tikz}
\usetikzlibrary{shapes, patterns}
\usetikzlibrary{arrows.meta}
\usepackage{pgfplots}

\usepackage{tikz}
\usetikzlibrary{matrix,positioning,fit,backgrounds}
\usepackage{lettrine}
\usepackage{balance}
\usepackage{graphicx}
\usepackage{longtable}


\usepackage{bm}
\usepackage{amsmath}
\usepackage{amssymb}



\newcommand{\vect}[1]{}
\newcommand{\mat}[1]{}

\newcommand{\diffs}[3]{\frac{\partial^2 #1}{
\ifx#2#3 
\partial #2^2
\else
\partial #2 \partial #3
\fi
}}



\newcommand{\chiv}{\mathbf{\chi}}





\newcommand{\bv}{{b}}

\newcommand{\fv}{{f}}
\newcommand{\gv}{{g}}

\newcommand{\hv}{{h}}
\newcommand{\kv}{{k}}

\newcommand{\pv}{{p}}

\newcommand{\xiv}{{\xi}}

\newcommand{\qv}{{{q}}}
\newcommand{\dqv}{\dot{{q}}}

\newcommand{\rv}{{{r}}}

\newcommand{\uv}{{u}}
\newcommand{\vv}{{v}}

\newcommand{\wv}{{w}}

\newcommand{\xv}{{x}}

\newcommand{\yv}{{y}}


\newcommand{\sigmav}{{\sigma}}

\newcommand{\tauv}{{\tau}}



\newcommand{\Gammam}{{\Gamma}}




\newcommand{\Cm}{\mat{C}}






 \usetikzlibrary{calc}
\usepackage{etoolbox}


\usepackage{svg}
\usepackage{float}
\usepackage{hyperref}
\usepackage{academicons}
\usepackage[dvipsnames]{xcolor}
\usepackage{xcolor}
\usepackage{algorithm}
\usepackage{algpseudocode}

\definecolor{invictusRed}{RGB}{185, 28, 28}
\definecolor{techGrey}{RGB}{156, 163, 175}
\definecolor{darkGraphite}{RGB}{75, 85, 99}
\definecolor{lightConcrete}{RGB}{229, 231, 235}
\definecolor{absBlack}{RGB}{10, 10, 10}

\usetikzlibrary{matrix, shapes.geometric, positioning, backgrounds}

\newcommand{\mirko}[1]{{\color{WildStrawberry}\footnotesize MM: #1}}

\usepackage{siunitx}

\usepackage{amsthm}
\theoremstyle{plain}

\newtheorem{thm}{Theorem}
\newtheorem{lem}{Lemma}

\newtheorem{defn}{Definition}
\newtheorem{prop}{Proposition}

\newtheorem{assumpt}{Assumption}
\newtheorem{problem}{Problem}

\usepackage{comment}
\usepackage[font=small]{caption}
\usepackage{subcaption}
\usepackage{calc}
\newtheorem{rem}{\textbf{Remark}}[section]
\usepackage{mathtools} 
\usepackage{listings}
\usepackage{xcolor}


\usepackage{booktabs,tabularx,makecell}

\usepackage{environ}

\newif\ifarxiv
\arxivtrue

\author{Mirko Mizzoni$^{1,\orcidlink{0009-0006-2165-3475}}$, Pieter van Goor$^3$\orcidlink{0000-0003-4391-7014}, Barbara Bazzana$^1$\orcidlink{0000-0002-2843-4324}, and  Antonio Franchi$^{1,2,\orcidlink{0000-0002-5670-1282}}$
\thanks{$^1$Robotics and Mechatronics group, Faculty of Electrical Engineering,  Mathematics, and Computer Science (EEMCS), University of Twente, 7500 AE Enschede, The Netherlands. {\footnotesize \tt m.mizzoni@utwente.nl}, {\footnotesize } {\footnotesize \tt schol@r-franchi.eu}}
\thanks{$^2$Department of Computer, Control and Management Engineering, Sapienza University of Rome, 00185 Rome, Italy, {\footnotesize \tt s.orelli@uniroma1.it, schol@r-franchi.eu } {\footnotesize \tt }}
\thanks{$^{3}$ School of Aerospace, Mechanical, and Mechatronic Engineering (AMME), Faculty of Engineering, University of Sydney, NSW, 2006, Australia. {\footnotesize \tt pieter.vangoor@sydney.edu.au}}
\thanks{This work was partially funded by the Horizon Europe research agreement no. 101120732 (AUTOASSESS).}}

\usepackage{orcidlink}
\usetikzlibrary{shapes.geometric, positioning, matrix}
\title{\LARGE  \bf Switching Among Feedback-Linearizing Output Sets (Melds):\\
Dwell-Time and Compatibility Guarantees}

\begin{document}
\maketitle



\begin{abstract}
We study switching among multiple square selections of output functions (melds) drawn from a deck of candidate outputs for nonlinear systems that are static feedback linearizable via outputs. Fixing an operating point, each meld induces a distinct feedback-linearizing coordinate chart defined on a common neighborhood. Switching between melds therefore produces state-dependent coordinate mismatches that are not captured by classical switched-system analyses. We quantify this effect through Lipschitz bounds on the cross-chart maps over a compact safe set and introduce a reference-compatibility constant that measures mismatch among reference families across melds. We derive  an explicit dwell-time condition depending on controller decay rates and the compatibility constant, that guarantees exponential decay of the active-output tracking errors between switches, seamless tracking of outputs shared by consecutive melds, and uniform boundedness of the state error within the safe set. A planar 3R manipulator illustrates the results.
\end{abstract}

\section{Introduction}

Many nonlinear tracking controllers rely on feedback linearization techniques~\cite{Isidori1995, NijmeijerSchaft}: one selects a square output vector and makes its dynamics linear by static state feedback. This approach is widely used in robotics and mechatronics because it yields fast, interpretable tracking behavior and explicit tuning through linear gains~\cite{2016-Correll, 2021g-OllTogSuaLeeFra}.

In many practical systems, however, the number of candidate outputs exceeds the number of available inputs. A robot manipulator, for instance, may have more task-relevant quantities than actuators: multiple end-effector coordinates, joint variables, energy-related outputs, contact-related outputs, or sensing-driven objectives. When objectives change over time—because a supervisor issues new commands, because the environment evolves, or because resources are temporarily constrained—the controller cannot track all outputs simultaneously. Instead, it must choose which subset of outputs to track at each time, and potentially switch among subsets as priorities evolve.

A common way to handle this mismatch is output prioritization through optimization-based or null-space control schemes. These methods are flexible, but they can introduce solver dependence, conservatism in stability guarantees, and performance degradations at transitions~\cite{task-priorit,ref_mpc,He2015OnSO,8814618}.

This motivates a complementary perspective: rather than blending many outputs at once, switch among a finite set of feedback-linearizing output choices, using standard linearizing controllers within each choice.

At first glance, this looks like a standard switched-systems problem: each mode corresponds to a stabilizing closed-loop controller, so perhaps stability follows from existing dwell-time results ~\cite{4395172,5717596,Liberzon2003,TomlinSastry1998}. The difficulty is that in feedback linearization, each output choice induces a different feedback-linearizing coordinate transformation. Switching modes therefore does not merely change the closed-loop dynamics in fixed coordinates; it also changes the coordinates used to represent tracking errors and state deviations. Concretely, the same physical state can be mapped to different linearized coordinates depending on the active output set, and at a switch the "error state” is effectively re-parameterized by a nonlinear, state-dependent change of chart. Classical dwell-time analyses for switched systems typically assume a common state representation (or at least compatible Lyapunov coordinates), and therefore do not directly capture this chart-induced mismatch~\cite{Liberzon2003}.

This work develops constructive stability guarantees for switching among feedback-linearizing output selections.  We consider a nonlinear system that is static feedback linearizable (via outputs) around a fixed operating point. We start from a deck of candidate scalar outputs and focus on square selections of size equal to the number of inputs. We call a square selection a ``meld'' - borrowed from card games -  if it is valid for exact state-space linearization (i.e., it has full vector relative degree and a nonsingular decoupling matrix at the operating point). Each meld defines a local feedback-linearizing coordinate chart on a common neighborhood of the operating point. A switching signal selects one meld at a time, resulting in piecewise-smooth dynamics, where each switching instant induces a nonlinear change of feedback-linearizing coordinates
(see Figure~\ref{fig:scheme}).
The key question is:
\begin{quote}
\emph{
``Under what conditions does switching among such melds preserve boundedness of the overall state and guarantee consistent tracking, despite chart changes at each switch?''}
\end{quote}

We answer this question locally on a compact safe set where all considered melds remain valid; the central difficulty is that switching induces a nonlinear change of feedback-linearizing coordinates, generating state-dependent mismatches at each switching instant.

We quantify this effect through uniform Lipschitz bounds on the inverse chart maps and on the associated cross-chart mappings. In addition, we introduce a reference-compatibility constant that measures the mismatch between reference families across melds. This constant makes explicit that dwell time alone cannot compensate for incompatible references.

Under these ingredients, we derive an explicit minimum dwell-time condition depending on the closed-loop decay rate, the cross-chart Lipschitz bounds, and the reference-compatibility constant. 

The exponential decay of the active-output tracking errors on each dwell interval follows directly from the single-meld design. The main additional guarantees under switching are:
\begin{enumerate}
   \item \emph{Seamless tracking of shared outputs:} any output shared by consecutive melds retains its exponential convergence across the switch;
  \item \emph{Uniform boundedness and safety:} the state remains uniformly bounded with respect to the reference-induced state, and remains in the safe set under a margin condition.

\end{enumerate}

\begin{figure}[t!]
    \centering
    \includegraphics[trim= 0.2cm 0 0 0,clip,width=1.05\linewidth]{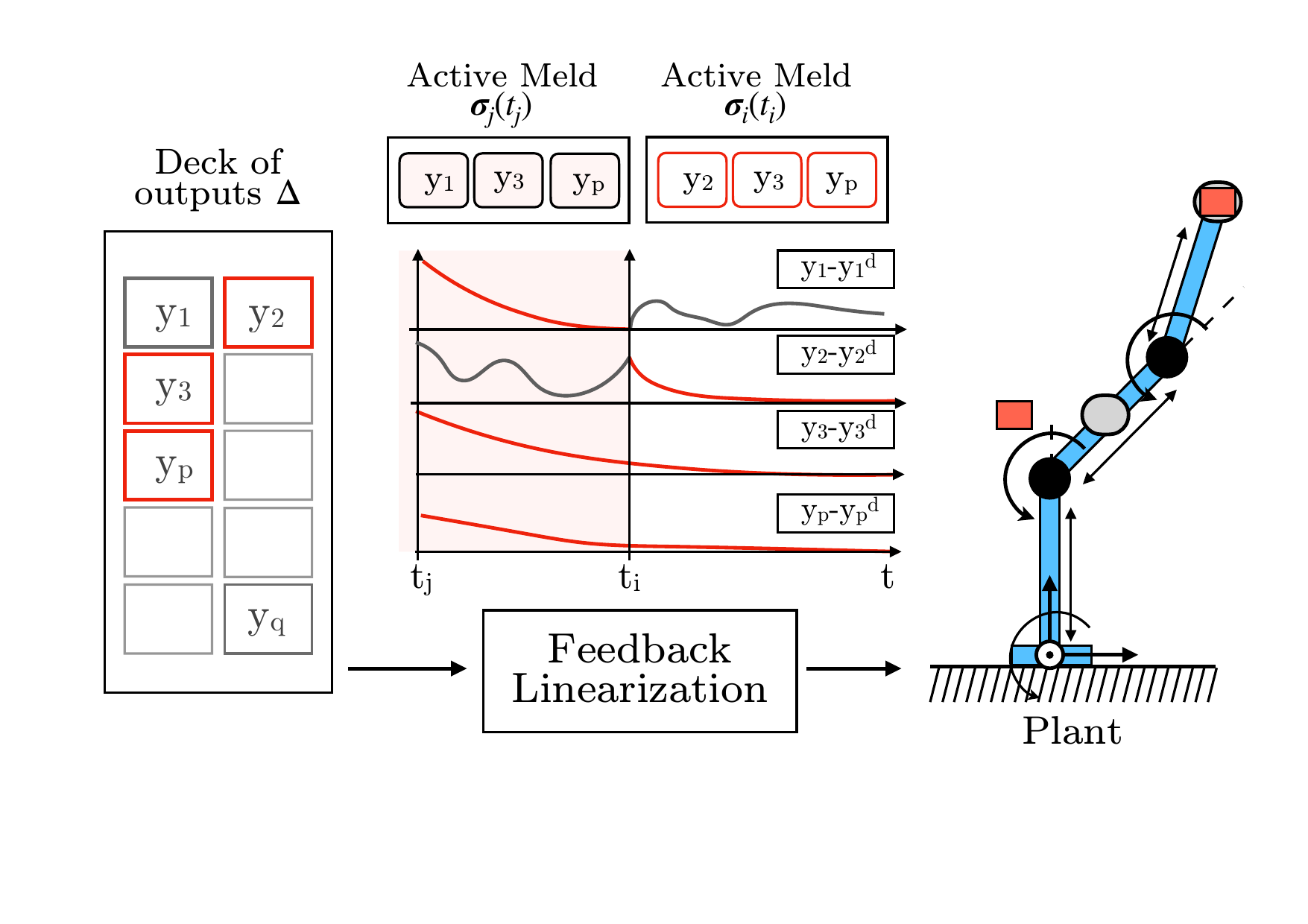}
    \caption{Conceptual illustration of switching among feedback-linearizing output selections (melds). From a deck of candidate outputs $\Delta$, different square selections (melds) are activated at switching instants. Each meld induces a distinct feedback-linearizing coordinate chart. At switching, tracking errors are re-parameterized, yielding transient mismatches for newly activated outputs, while outputs shared across consecutive melds preserve their exponential convergence.}
    \label{fig:scheme}
\end{figure}

This framework generalizes previous work in~\cite{2025a-MizGooFra}, which was limited to only two of the subsets that we call melds here, and did not analyze the stability of the system dynamics under switching. Notice also that the resulting closed-loop system is modeled as a switched system~\cite{Liberzon2003} with externally prescribed switching signals, distinguishing it from hybrid control schemes that rely on state-dependent transitions~\cite{opt_control_h_switched_sys,9662411}.

The outline of the letter is as follows. Section~\ref{subs:preliminaries} introduces the foundational concepts and notation. Sections~\ref{subs:sect_method} and~\ref{sect:feed_control} present the notion of melds and address the \emph{seamless tracking of shared outputs} guarantee. Section~\ref{sect:state_bound} addresses the \emph{uniform boundedness and safety} of the state. Finally, the letter concludes with an application to a 3R robotic platform and supporting simulation results in Section~\ref{sec:rigid-body}.

\section{Preliminaries}\label{subs:preliminaries}

For a comprehensive introduction to feedback linearization, the interested reader is referred to \cite{Isidori1995}.

Consider a multivariable nonlinear system
\begin{equation}
    \Sigma: \quad \dot{\xv}  = \fv(\xv)+\sum_{i=1}^p\gv_i(\xv)u_i
    \label{eq:sys}
    \end{equation}
where \mbox{$\xv\in \mathbb{R}^n$} is the state, $u_1,\ldots,u_p$ are the control inputs,  
$\fv(\xv)$,  $\gv_1(\xv)$, $\ldots$, $\gv_{p}(\xv)$ are smooth vector fields.
The output vector is defined as 
   $ \yv=\hv(\xv)$
with  $\hv(\xv):=\begin{bmatrix} h_1(\xv) \;\; \ldots \;\;h_{p}(\xv)\end{bmatrix}^\top$  and  where \mbox{\(\hv: \mathcal{U} \to \mathbb{R}^p\)} is a smooth mapping on an open set \mbox{$\mathcal{U} \subset \mathbb{R}^n$}.
The  system (\ref{eq:sys}) is said to have \emph{(vector) relative degree} $\rv = [
    r_1 \cdots r_p
]^\top \in \mathbb{N}^p$ at a point $\xv^\circ$ (sometimes denoted with $\rv(\xv^\circ)$) w.r.t. the input-output pair  $(\uv,\yv)$ if  \textrm{i)}
$  L_{\gv_j}L^{k}_{\fv} h_i(\xv) =0,$
for all $1\leq j \leq p$, for all $k< r_i-1$, for all $1\leq i \leq p$ and for all $\xv$ in a neighborhood of $\xv^\circ$, and 
\textrm{(ii)} the $p\times p$ \emph{decoupling} matrix  $A(\xv) := \ \big[ \; L_{\gv_j} L_{\fv}^{\,r_i-1} h_i(\xv) \;\big]_{{i=1,\dots,p;\; j=1,\dots,p}}$
is full rank at $\xv = \xv^\circ$. 
Then the output vector at the $\rv$-th derivative(s) may be rewritten as an affine system of the form
\begin{equation}
{\yv}^{(\rv)} :=[
    y_1^{(r_1)}  \cdots \;  y_{p}^{(r_{p})}
]^\top =\bv(\xv)+A(\xv)\uv,
\label{eq:y_r_now}
\end{equation}
with  $
 \bv(\xv):=\left[
L_{\fv}^{(r_1)}{h_1(\xv)}  \cdots  
L_{\fv}^{(r_{p})}{h_{p}(\xv)}
\right]^\top.$

\subsection{Exact state–space linearization (via output function)}
Suppose the system \eqref{eq:sys} has some (vector) relative degree $\rv$ at $\xv^\circ$ and that the matrix $G(\xv^\circ)=[\gv_1(\xv)\cdots\gv_p(\xv)]$ has rank $p$ in a  neighborhood $\mathcal{U}$ of $\xv^\circ$. 
Suppose also that  \mbox{$|\rv|:=\sum^n_{i=1}r_i=n$}, and choose the  control input to be 
$\uv = A^{-1}(\xv)[-\bv(\xv)+\vv],$
where $\vv \in \mathbb{R}^{p}$ can be assigned freely, and $\bv(\xv)$ is defined as above.
Then the output dynamics \eqref{eq:y_r_now} become
$\yv^{(\rv)} = \vv$.
We refer to  $\yv$ as a \emph{feedback-linearizing output vector}: the system state and input can be expressed in terms of $\yv$ and its time derivatives. 
In the sequel, the term `feedback linearization' is used as short-hand for `exact state-space linearization via output functions'.

\section{Melds of a Control System}
\label{subs:sect_method}

Consider a dynamical system $\Sigma$ as in~\eqref{eq:sys}
satisfying the same regularity assumptions as in Section~\ref{subs:preliminaries}.

\subsection{Deck, Selection, Melds}

We define a \emph{deck} of $q\geq p$ candidate outputs to be a list of $q$ scalar functions of the state $\xv$, i.e.,
\begin{equation}
\Delta = \{ y_i = h_i(\xv) \}_{i=1}^q, \qquad {y}_{\Delta} := [h_1(\xv)\ \cdots\ h_q(\xv)]^\top,
\end{equation}
and let ${r} = [r_1\cdots r_q]^\top$ be their relative degrees at an operating point $\xv^\circ$.
A \emph{square selection} is $\sigmav \in \{0,1\}^q$ with exactly $p$ ones; denote the active indices by $\mathcal{I}_{\sigmav}=\{i^\sigma_1,\ldots,i^\sigma_p\}$ and the selection matrix $\Gamma_{\sigmav}\in\{0,1\}^{p\times q}$ formed by picking the rows indexed by $\mathcal{I}_{\sigmav}$ in the $q\times q$ identity matrix. Then ${y}_{\sigmav} := \Gamma_{\sigmav} {y}_{\Delta} $.

For the whole deck $\Delta$, define the $q\times p$ decoupling matrix
\begin{equation}
A_\Delta(\xv) := \big[L_{\gv_j} L_\fv^{\,r_i-1} h_i(\xv)\big]_{i=1,\dots,q;\ j=1,\dots,p}.
\end{equation}
For a square selection $\sigmav$, the $p\times p$ decoupling matrix is
\begin{equation}
A_{\sigmav}(\xv) := \Gamma_{\sigmav} A_{\Delta}(\xv),\qquad {r}_{\sigmav} := \Gamma_{\sigmav} {r}.
\end{equation}

\begin{defn}[Meld]
    Let $x^\circ \in \mathbb{R}^n$ be  a fixed operating point.
    A square selection $\sigma \in \{0,1\}^q$ with exactly $p$ ones is called a \emph{meld} at $x^\circ$ if i)  $\sum_{i\in \mathcal{I}_{\sigmav}} r_i = n$, and ii) $\det A_{\sigmav}(\xv^\circ)\neq 0$. 
    Formally,
    \[
\mathcal{M}_{\xv^\circ} := \left\{ \sigmav \in \{0,1\}^q : \sum_{i\in \mathcal{I}_{\sigmav}} r_i = n,\ \det A_{\sigmav}(\xv^\circ_{})\neq 0\right\}.
\]

    The set of all melds at $x^\circ$ is denoted with $\mathcal{M}_{\xv^\circ}$.
\end{defn}
\begin{rem}[Local common validity]
Fix $x^\circ$ and let $\sigma \in \mathcal{M}_{x^\circ}$.
Since $\det A_\sigma(x)$ is continuous and $\det A_\sigma(x^\circ)\neq 0$,
there exists an open neighborhood $\mathcal{B}_\sigma$ of $x^\circ$ such that
$\det A_\sigma(x)\neq 0$ for all $x\in \mathcal{B}_\sigma$. Moreover, by standard
feedback-linearization regularity results, the vector relative degree
$r_\sigma$ remains constant on a possibly smaller neighborhood of $x^\circ$.
 Intersecting these neighborhoods if
necessary, we may assume that on $\mathcal{B}_\sigma$ both
properties hold. 
Because $\mathcal{M}_{x^\circ}$ is finite, the intersection
$$
\mathcal{B}:= \bigcap_{\sigma\in\mathcal{M}_{x^\circ}} \mathcal{B}_\sigma
$$
is a nonempty open neighborhood of $x^\circ$ on which all melds are valid.
\end{rem}

\begin{defn}[Safe set]
A safe set is any nonempty compact set $\mathcal{K}\subset \mathcal{B}$.
\end{defn}

Throughout the remainder of the letter, the operating point
$x^\circ$ is fixed and all melds, neighborhoods, and safe sets
are defined with respect to $x^\circ$.

\section{Output Tracking of Switching Melds}
\label{sect:feed_control}

In this section, we formalize the framework for switching between different output sets. Regularity assumptions and invariance properties are stated with respect to a safe set
$\mathcal{K}\subset \mathcal{B}$.

 The core difficulty in switching among melds is that each meld induces a different feedback-linearizing coordinate chart, so a switch causes a (state-dependent) change of coordinates in which tracking errors are represented. Our analysis proceeds in four steps: (i) we define the meld-dependent charts and the associated cross-chart maps;  (ii) we introduce deck-level reference jets and tracking/mismatch errors; (iii) we present a unified control law that reduces to standard input--output linearization on each interval of constant meld; and (iv) we characterize what remains invariant under switching (shared outputs) and formulate the dwell-time problem that controls the chart-induced jumps.

\subsection{Coordinate Transformations and Meld Mappings}
For each output $y_i=h_i(\xv)$ of relative degree $r_i$, define the map which lifts the state to the $(r_ 
i - 1)$-th jet of the $i$-th output trajectory\footnote{We use ``jet'' in the standard sense of jet
bundle coordinates; see, e.g., \cite{Saunders1989}. 
}:
\begin{equation}
\Phi_i:\mathbb{R}^n\to\mathbb{R}^{r_i},\qquad \Phi_i(\xv):=\big[y_i,\ \dot y_i,\ \ldots,\ y_i^{(r_i-1)}\big]^\top=:\overline{\yv}_i,
\end{equation}
where $y_i^{(k)}=L_\fv^{\,k}h_i(\xv)$ for all $k=0,\ldots,r_i-1$.
For a given meld $\sigmav \in \mathcal{M}_{x^\circ}$, stack the selected maps to define 
\begin{equation}
\Phi_{\sigmav}:\mathbb{R}^n\to\mathbb{R}^n,\qquad 
\Phi_{\sigmav}(\xv):=\big[\Phi_{i^{\sigma}_1}(\xv)^\top \ \cdots\ \Phi_{i^{\sigma}_p}(\xv)^\top\big]^\top,
\end{equation}
which is a local diffeomorphism on $\mathcal{B}_{\sigmav}$ by standard exact linearization results.
However, local invertibility guarantees only point-wise (neighborhood-wise) invertibility, and does not by itself provide a globally well-defined inverse on a region of interest, nor uniform bounds needed to control coordinate jumps at switching instants. Since our results are local to a compact safe set $\mathcal{K}$, we explicitly assume that each chart $\Phi_{\sigmav}$ admits a diffeomorphic restriction to $\mathcal{K}$, with Lipschitz inverse and Lipschitz cross-maps, as stated next.

\begin{assumpt}(Regularity).\label{assumpt:common_validity_region_lip_const}
For each $\sigmav \in  \mathcal{M}_{\xv^\circ}$, there exists  an open set $\mathcal{U}_{\sigmav}$ with $\mathcal{K}\subset \mathcal{U}_{\sigmav}$ such that the map
$\Phi_{\sigmav}: \mathcal{U}_{\sigmav}\to \Phi_{\sigmav}(\mathcal{U}_{\sigmav})$ is a diffeomorphism   with inverse $\Psi_{\sigmav}$ \footnote{If a smooth map has a Jacobian that is uniformly nonsingular on a compact set, then the inverse map exists on that set and is Lipschitz with a uniform constant. In particular, since the feedback–linearizing coordinate maps $\Phi_{\sigmav}$ are smooth and nonsingular on $\mathcal{K} \subset \mathcal{B}_{\sigmav}$, their inverses $\Psi_{\sigmav}$ are well defined and uniformly Lipschitz on $\Phi_{\sigmav}(\mathcal{K})$. The same argument applies to the maps $\Theta_{i,\sigmav}$, which are smooth compositions involving $\Psi_{\sigmav}$ on compact domains.}. Moreover, $\Psi_{\sigmav}$ is Lipschitz on $\Phi_{\sigmav}(\mathcal{K})$.
For each $i \in \{1,\dots,q\}$, the cross-map
$$
\Theta_{i,\sigmav}:=\Phi_i\circ \Psi_{\sigmav}
$$ is Lipschitz on $\Phi_{\sigmav}(\mathcal{K})$.
\end{assumpt}

Define the uniform bounds
\begin{equation}
L^\Psi := \max_{\sigmav \in  \mathcal{M}_{s}} L^\Psi_{{\sigmav}}, \qquad
L^\Theta := \max_{\sigmav \in  \mathcal{M}_{s}} \max_{i} L^\Theta_{{i,\sigmav}} .
 \label{eq:constants}
\end{equation}
Physically, the cross-map $\Theta_{i,\sigmav}$ answers the question:
\emph{given the active meld coordinates, what would output $y_i$ (and its derivatives) be?}
That is, $\Theta_{i,\sigmav}(\cdot)$ returns the jet of $y_i$ evaluated at the state implied by the active meld chart. 
This is key for switching: it allows us to compare inactive references with the trajectories implied by the currently active meld and to bound the resulting mismatch.

\subsection{Reference Trajectories and Tracking Errors}
Let $\yv^d(t)\in\mathbb{R}^q$ collect the desired scalar trajectories for the deck, i.e., $\yv^d(t)=[y_1^d(t)\ \cdots\ y_q^d(t)]^\top$. Assume each $y_i^d:[t_0,\infty)\to\mathbb{R}$ is $C^{r_i-1}$ so that the corresponding $(r_i-1)$-jet is well-defined. Define the desired jet for output $i$ as
\begin{equation}\label{eq:desired_jet_def}
\overline{\yv}_i^d(t) := \big[y_i^d(t)\ \dot y_i^d(t)\ \ldots\ y_i^{d\,(r_i-1)}(t)\big]^\top \in \mathbb{R}^{r_i}.
\end{equation}
We define the tracking error for output $y_i$ as the difference between the desired and actual jets:  
   \begin{equation}
       \xiv_i(t) := \overline{\yv}^d_i(t)- \overline{\yv}_i(t) ,
   \end{equation}
 When the output $y_i$ is part of the active meld, ${\xiv}_i$ is the standard tracking error. When the output $y_i$ is \emph{inactive}, $\xiv_i$ represents the \emph{compatibility error} (or reference mismatch), i.e., the deviation between the non-active reference $\overline{\yv}^d_i$ and the trajectory
implied by the active meld. Bounding this error is crucial for stability during switching.

\subsection{Control law}
 We associate a gain vector ${k}_i$ with each output $y_i$, chosen such that the error dynamics are Hurwitz.  We then construct a virtual input ${w} \in \mathbb{R}^q$ for the entire deck, where each entry $w_i$ is of the form :

\begin{equation}
\begin{split}
{w}_i &= \yv_i^{d,(r_i)}+\underbrace{k_i^1(y^d_i-y_i)+\ldots+k^{r_i-1}_i(y_i^{d,(r_i-1)}-y_i^{(r_i-1)})}_{={k}_i^\top \xiv_i}.
\end{split}
\label{eq:w}
\end{equation}
When a particular meld $\sigmav$ is active, the actual control input uses only the components selected by $\Gamma_{\sigmav}$, yielding the standard feedback-linearizing law for that meld.

Given references $\overline{{y}}^d_{\sigmav}:=\big[{\overline{\yv}^d_{i^{\sigma}_1}}^\top \ \cdots\ {\overline{\yv}^d_{i^{\sigma}_p}}^\top\big]^\top,$ for the active meld $\sigmav$, the map \mbox{$\chi: [t_0,\infty)\to \mathbb{R}^n$}, defined as  
\begin{equation}
\chi:=\Psi_{\sigmav}\big(\overline{{y}}^d_{\sigmav}\big)
\end{equation}
corresponds to the desired reference state induced by the meld~$\sigmav$.

     %

\subsection{Recall of Standard Single Output vector Tracking}\label{sect:recall_standard_output_vector}
We briefly recall the single-meld tracking result to extract uniform exponential decay constants $(C,\alpha)$ that will later enter the cross-chart error bounds under switching.
Let us focus initially on a single meld  $\sigmav\in\mathcal{M}_{\xv^\circ}$. We denote with $\yv_{\sigmav} = \Gammam_{\sigmav}\yv$, i.e., the output vector corresponding to the meld $\sigmav$. Then, at the (vector) relative degree {${\rv}_{\sigmav}$}, the output dynamics takes the form 
\begin{equation}
\begin{split}
\yv^{({\rv}_{\sigmav})}_{\sigmav}&=\Gammam_{\sigmav}\left (\left[\begin{smallmatrix}
        b_1(\xv)\\
        \vdots \\
        b_q(\xv)
    \end{smallmatrix}\right]+ A_{\Delta}(\xv)\uv\right)  = \Gammam_{\sigmav} \bv(\xv)+ A_{\sigmav}(\xv)\uv.
\end{split}
\label{eq:output_res}
\end{equation}
By construction,  the matrix $A_{\sigmav}(\xv)$ is a  nonsingular $p\times p$ square matrix. This allows us to design a   control law   of the form
 \begin{equation}
    \uv_{\sigmav} = A_{\sigmav}(\xv)^{-1} (-\Gammam_{\sigmav}\bv(\xv)+\Gammam_{\sigmav} \wv).
  \label{eq:control}%
\end{equation}%
with the entries of $\wv$ as in~\eqref{eq:w}.
This choice leads to $p$-\emph{linear} and \emph{decoupled} subsystems each with state $\xiv_{i}$ and with dynamics 
\begin{equation}
\label{eq:subsystems_decoupled}
    \dot{\xiv}_i = \left[\begin{smallmatrix}
    \begin{matrix}
        {{0}_{r_i-1\times 1}} & 
       {{I}_{r_i-1\times r_i-1}}\\ 
    \end{matrix}\\
    \;-\kv_i^\top
\end{smallmatrix}\right]\xiv_i =: A_i \xiv_i, \qquad \forall i\in \mathcal{I}_{\sigmav_j}.
\end{equation}
Therefore, there exist two constants $C_{\sigmav},\alpha_{\sigmav}>0$  such that
\begin{equation}
\resizebox{\linewidth}{!}{$
||\overline{\yv}^d_{\sigmav}(t)-\overline{\yv}_{\sigmav}(t)||\leq C_{\sigmav}e^{-\alpha_{\sigmav} (t-t_0)}||\overline{\yv}^d_{\sigmav}(t_0)-\overline{\yv}_{\sigmav}(t_0)||\;\; \forall t\geq t_0,
    \label{eq:link}$}
\end{equation}
where $C_{\sigmav},\alpha_{\sigmav}$ depends on the chosen gains.

Repeating the same reasoning for all the possible melds in the deck we obtain $2|\mathcal{M}_s|$ constants. Let us denote with $\alpha$ the time constant corresponding to the slowest eigenvalue chosen via the control gains for all the possible melds, i.e.,
\begin{equation}
    \alpha = \min_{j =1,\ldots,|\mathcal{M}_{s}|}\alpha_{\sigmav_j}
    \label{eq:alpha_min}
\end{equation} and with  $C$ the maximum constant for all possible melds, i.e.,
\begin{equation}
C = \max_{j =1,,\ldots,|\mathcal{M}_{s}|}C_{\sigmav_j}.
\label{eq:C_max}
\end{equation}

\subsection{Seamless Output Tracking of Switching Melds}

Departing from the state of the art, let us now consider the case of multiple melds. First, we need to introduce the concept of switching signal.

\begin{defn}[Switching signal]\label{def:switching_signal}
Let $\mathcal{T}=\{t_k\}_{k\in\mathbb N_0}$ be a strictly increasing sequence of switching instants with $t_0\ge 0$
and $t_k\to\infty$.
Fix a set of admissible melds $\mathcal{M}_{s}\subseteq~\mathcal{M}_{\xv^\circ}$.
A \emph{switching signal} is a piecewise-constant map
$\sigmav:[t_0,\infty)\to\mathcal{M}_{s}$
defined by $\sigmav(t)=\sigmav_k$ for \mbox{$t\in[t_k,t_{k+1})$}, where $\sigmav_k\in\mathcal{M}_{s}$.
\end{defn}

Unlike state-dependent switching strategies (e.g., sliding mode or guard conditions),
here the switching signal is exogenous.
At this point, the control law~\eqref{eq:control} can be applied by replacing the fixed selection $\sigmav$ with the switching signal $\sigmav(t)$.
Then, the properties mentioned in the previous section hold on each interval $[t_k,t_{k+1})$ where a single meld is active.
In particular, for $i\in \mathcal{I}_{\sigmav(t)}$ the error subsystem associated with $\xiv_i$ is exponentially stable on $[t_k,t_{k+1})$.

\paragraph{Output Shared across multiple melds}
It is interesting to examine  the behavior of the outputs shared across two or more consecutive melds of the signal $\sigmav(t)$ under this control law.  
To this aim, let us introduce the set of all outputs shared across multiple melds. For a given $k,l\in \mathbb{N}_0$,  consider the block of $l$ consecutive intervals 
$[t_k, t_{k+l+1}) = \bigcup_{j=k}^{k+l} [t_j, t_{j+1})$ with $t_j \in \mathcal{T}$.  
Then, define the set of output indices that remain active throughout this block as 
\begin{align}\label{eq:active_outputs_over_indices}
\mathcal{S}_{k,l} := \bigcap_{j=k}^{k+l}\mathcal{I}_{\sigmav(t_j)}.
\end{align}

\begin{prop}
Let $(\Sigma,\Delta)$ and a switching signal $\sigmav(t)$ taking values in $\mathcal{M}_{s}$. Fix $k \geq 0$ and $l \in \mathbb{N}_0$, and consider the set $\mathcal{S}_{k,l}$ defined in \eqref{eq:active_outputs_over_indices}. 
Then, under the control law \eqref{eq:control}, the subsystem associated  with $\xiv_i$ evolves on $[t_k, t_{k+l+1}) $ according to the same dynamics as in each individual meld interval, i.e., it is not affected by the switching. In particular, its exponential stability is preserved over $[t_k, t_{k+l+1}) $.
\end{prop}
\begin{proof}
 Fix $i\in \mathcal{S}_{k,l}$ and
    consider the time interval $[t_k,t_{k+l+1})$.  
    We have seen in Sect.\ref{sect:recall_standard_output_vector} that  for every meld - under the control law \eqref{eq:control}, the  system dynamics can be decomposed in $p$ subsystems, each of the form shown in \eqref{eq:subsystems_decoupled}.
At every switching instant, the subsystem associated with $\xiv_i$ remains unaffected and continues with the same dynamics thereafter. This completes the proof.
\end{proof}

Secondly, it is crucial to understand and characterize the behavior of the whole state under the proposed switching control law. 
The previous results ensure that, in between the switching time instants, when only one meld is active, $\xv(t)$ converges exponentially to $\chiv(t)$. 
However, switching between stable closed-loop systems does not guarantee overall stability. Even when each subsystem is exponentially stable, the switching itself can lead to unbounded state growth~\cite{Liberzon2003}. The stability of a switched system can sometimes be guaranteed by enforcing a lower bound on the interval between consecutive switches, referred to as a \emph{minimum dwell time}. This consideration naturally leads to the following problem.

\section{Guarantees on the Boundedness of the state}\label{sect:state_bound}
The remaining question is whether switching among exponentially stable meld controllers preserves boundedness of the full state. Since switching induces coordinate mismatches, stability cannot be inferred from single-meld convergence alone. We therefore seek explicit dwell-time conditions that:
\begin{enumerate}[1.]
    \item bound the tracking error uniformly across switches
    \item quantify the effect of cross-chart Lipschitz mismatch, and 
    \item guarantee that the closed-loop state remains inside the safe set $\mathcal{K}$.
\end{enumerate}

\begin{assumpt}\label{assumpt_output_bounded}
    Denote with $\xiv_{i,\sigmav}^d:=\overline{\yv}^d_i - \Theta_{i,\sigmav}(\overline{\yv}^d_{\sigmav})$.
    We assume there exists a constant $N>0$ such that 
    $$
    ||\xiv_{i,\sigmav}^d(t)||_{\infty} \leq N,
    $$
    for all $i=1,..,q$ and $\sigma \in \mathcal{M}_{s}$.
\end{assumpt}

When meld $\sigmav$ is active at time $t$, the active references $\overline{{y}}^d_{\sigmav}(t)$ induce a reference state $\xv_{\sigmav}^d(t):=\chi(t)$. For any (possibly inactive) output $y_i$, the reference that is \emph{implied} by the active meld is $\Theta_{i,\sigmav}(\overline{{y}}^d_{\sigmav(t)})=\Phi_i(\xv_{\sigmav}^d(t))$. The vector $
\xiv^d_{i,\sigmav} := \overline{{y}}^d_i - \Theta_{i,\sigmav}\big(\overline{{y}}^d_{\sigmav}\big)
$
is the \emph{reference mismatch across melds}. Assumption~\ref{assumpt_output_bounded} bounds this mismatch uniformly by $N$, which quantifies the \emph{compatibility of the reference families} across the scheduled melds.

We call $N$  the \emph{reference-compatibility constant.} 
\begin{rem}\label{rem:bound_N}
    The bound 
$N$ is minimized when inactive references agree with the references implied by the active meld, i.e., $
\overline{\yv}_{d,i}(t)\approx \Theta_{i,\sigmav(t)}(\overline{\yv}_{d,\sigmav(t)}(t)).
$
This can be enforced by defining the reference trajectories of inactive outputs so that they remain compatible, over time, with the evolution implied by the active meld, rather than prescribing them independently.
\end{rem}

\begin{lem}\label{lem_bound_y}
Suppose Assumptions~\ref{assumpt:common_validity_region_lip_const}, and~\ref{assumpt_output_bounded} hold.
    Consider any $j=1,\ldots ,|\mathcal{M}_{s}|$
    and any
    $t\geq t_k $ such that $\sigmav(t') = \sigmav_j$ $\forall t'\in[t_k,t)$.  Then, for every $i=1,\ldots ,q$ we have
\begin{equation}\label{eq:bound_output_not_active}
          || \xiv_i(t)||\leq L^\Theta Ce^{-\alpha (t-t_k)} ||\overline{\yv}^d_{\sigmav_j}(t_k)-\overline{\yv}_{\sigmav_j}(t_k)||+N, 
    \end{equation}
    where $\alpha$, $C$, and $L^\Theta$ are as defined in 
    \eqref{eq:alpha_min}, \eqref{eq:C_max}, and \eqref{eq:constants}, respectively.
\end{lem}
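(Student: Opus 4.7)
The plan is to bound the error $\bar{\yv}^d_i(t)-\bar{\yv}_i(t)$ for an arbitrary output $y_i$ (whether in-meld or off-meld relative to $\sigmav_j$) by decomposing it via the map $\Theta_{i,j}$ that relates the meld's trajectory to that of $y_i$, then combining the exponential decay of the in-meld error (from Section~\ref{sect:recall_standard_output_array}) with the Lipschitz and boundedness assumptions.

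First, I would observe that by the definitions of $\Phi_i$, $\Psi_{\sigmav_j}$, and $\Theta_{i,j}$, at every state $\xv(t)$ in the validity set $\mathcal{B}_{\sigmav_j}$ the identity $\bar{\yv}_i(t)=\Theta_{i,j}(\bar{\yv}_{\sigmav_j}(t))$ holds. The plan is then to insert $\Theta_{i,j}(\bar{\yv}^d_{\sigmav_j}(t))$ into the error as a pivot and apply the triangle inequality:
\begin{equation*}
\|\bar{\yv}^d_i(t)-\bar{\yv}_i(t)\| \leq \|\bar{\yv}^d_i(t)-\Theta_{i,j}(\bar{\yv}^d_{\sigmav_j}(t))\| + \|\Theta_{i,j}(\bar{\yv}^d_{\sigmav_j}(t)) - \Theta_{i,j}(\bar{\yv}_{\sigmav_j}(t))\|.
\end{equation*}

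Second, I would bound each of the two terms separately. The first term is $\|\xiv^d_{i,j}(t)\|$, which is uniformly bounded by $N$ thanks to Assumption~\ref{assumpt_output_bounded}. The second term is bounded using the Lipschitz property of $\Theta_{i,j}$ from Assumption~\ref{assmpt_Liptchiz}, yielding $L^\Theta_{\sigmav_j}\|\bar{\yv}^d_{\sigmav_j}(t)-\bar{\yv}_{\sigmav_j}(t)\| \leq L^\Theta\|\bar{\yv}^d_{\sigmav_j}(t)-\bar{\yv}_{\sigmav_j}(t)\|$ by the definition of $L^\Theta$ in \eqref{eq:ltheta}.

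Third, I would invoke the single-meld tracking bound \eqref{eq:link} from Section~\ref{sect:recall_standard_output_array}, which applies over the entire interval $[t_k,t)$ since only meld $\sigmav_j$ is active there; after replacing $C_{\sigmav_j}$ and $\alpha_{\sigmav_j}$ by their uniform surrogates $C$ and $\alpha$ from \eqref{eq:C_max} and \eqref{eq:alpha_min}, this yields
\begin{equation*}
\|\bar{\yv}^d_{\sigmav_j}(t)-\bar{\yv}_{\sigmav_j}(t)\| \leq C e^{-\alpha (t-t_k)}\|\bar{\yv}^d_{\sigmav_j}(t_k)-\bar{\yv}_{\sigmav_j}(t_k)\|.
\end{equation*}
Substituting this back and combining with the $N$ bound on the first term gives exactly \eqref{eq:bound_output_not_active}.

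The argument is essentially routine given the assumptions already collected; the only subtlety is making sure the bound applies uniformly to every $i=1,\ldots,q$ including both in-meld and off-meld outputs. For in-meld indices $i\in\mathcal{I}_{\sigmav_j}$, $\Theta_{i,j}$ is simply a coordinate projection so $\xiv^d_{i,j}\equiv 0$ and the $N$ term is trivially absorbed; for off-meld indices $\Theta_{i,j}$ captures the nontrivial dependence of $y_i$ on the in-meld coordinates, and both Assumption~\ref{assumpt_output_bounded} and Assumption~\ref{assmpt_Liptchiz} are precisely what is needed to handle this case, so no further technical difficulty arises.
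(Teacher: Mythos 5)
Your proposal is correct and follows essentially the same route as the paper's own proof: insert $\Theta_{i,j}(\bar{\yv}^d_{\sigmav_j}(t))$ as a pivot, apply the triangle inequality, bound one term by $N$ via Assumption~\ref{assumpt_output_bounded} and the other via the Lipschitz constant of $\Theta_{i,j}$, and then invoke the single-meld decay bound \eqref{eq:link} with the uniform constants $C$, $\alpha$, $L^\Theta$. Your explicit statement of the identity $\bar{\yv}_i(t)=\Theta_{i,j}(\bar{\yv}_{\sigmav_j}(t))$ and the remark on in-meld versus off-meld indices are welcome clarifications of steps the paper leaves implicit, but they do not change the argument.
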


\begin{proof}
Consider a generic output $y_i$  and let $\yv_{\sigmav_j}$ be the selected output vector.  In particular,  let us consider $\xiv_i$  which can be rewritten at a given time $t$ as: 
\begin{align*}
\xiv_i(t)
&= \overline {\yv}_i^d(t)-\Theta_{i,{\sigmav}_j}\big(\overline{\yv}_{\sigmav_j}(t)\big) \\
&= \underbrace{\overline{\yv}_i^d(t)-\Theta_{i,\sigmav_j}\big(\overline{\yv}_{\sigmav_j}^d(t)\big)}_{=:~\xiv^d_{i,\sigmav_j}(t)}
 +\underbrace{\Theta_{i,\sigmav_j}\big(\overline{\yv}_{\sigmav_j}^d(t)\big)-\Theta_{i,\sigmav_j}\big(\overline{\yv}_{\sigmav_j}(t)\big)}_{\text{Lipschitz term}}.
\end{align*}

Taking the norms: 
\begin{align}
\|\xiv_i(t)\|
&\le \|\Theta_{i,\sigmav_j}(\overline \yv_{\sigmav_j}^d(t))-\Theta_{i,\sigmav_j}(\overline \yv_{\sigmav_j}(t))\|
   +\| \xiv^d_{i,\sigmav_j}(t)\| \nonumber\\
&\le L^\Theta_{i,\sigmav_j}\,\|\overline \yv_{\sigmav_j}^d(t)-\overline \yv_{\sigmav_j}(t)\| + N.
\end{align}

Then using \eqref{eq:link} and considering $\alpha$ and $C$ instead of the two meld-dependent constants, it yields:
\begin{equation}
       ||\xiv_i(t)|| \leq L^\Theta Ce^{-\alpha (t-t_k)} ||\overline{\yv}^d_{\sigmav_j}(t_k)-\overline{\yv}_{\sigmav_j}(t_k)||+N.
   \notag   
\end{equation}
Due to the definition of $L^\Theta$ in~\eqref{eq:constants} we obtain~\eqref{eq:bound_output_not_active}.
This concludes the proof.
\end{proof}

Lemma~\ref{lem_bound_y} applies not only to the selected outputs of a given meld but to all outputs in the deck.

Moreover, it provides an exponential bound on the tracking error during a fixed meld interval, but the bound still depends on the error at the last switching instant. To ensure that switching does not lead to accumulation of error across intervals, we must impose conditions on the dwell times. In particular, we seek dwell-time bounds that guarantee uniform error bounds independent of the switching instant~$t_k$. This is formalized in the following theorem.

\begin{thm}\label{thm:output_bound}
   Let be given the pair $(\Sigma,\Delta)$  and a switching signal $\sigmav(t)$ taking values in $\mathcal{M}_{s}$. Suppose
Assumptions~\ref{assumpt:common_validity_region_lip_const}, and  \ref{assumpt_output_bounded} hold.
For all $t\in [t_k,t_{k+1})$ while $\sigmav(t)$ is active, the tracking error for each $i\in\{1,\dots,q\}$ satisfies: 
\begin{equation}\label{eq:exp_bound}
\|  \xiv_i(t) \|
\le L^\Theta C e^{-\alpha (t-t_k)}  \sum_{a\in \mathcal{I}_{\sigmav(t)}}\|\xiv_a(t_k)\|
 + N,
\end{equation}
where $L^\Theta,C,\alpha,N$ are the constants appearing in Lemma~\ref{lem_bound_y}.

Moreover, for all  $\epsilon>0$ if the switching instants satisfy:
\begin{equation}
t_1-t_0\geq \tau_0, \;\;\tau_0 := \frac{1}{\alpha}\ln \left (\frac{L^\Theta C \sum_{a\in \mathcal{I}_{\sigmav(t_0)}}\|{\xiv}_a(t_0)\| }{\epsilon}\right),
    \label{eq:delta0}
\end{equation}
and 
\begin{equation}
    t_{k+1}-t_k\geq\tau_k, \quad {\tau_k=\overline\tau}:=\; \frac{1}{\alpha}\ln\!\Big(\frac{L^\Theta C \, p(\epsilon+N)}{\epsilon}\Big), k\geq 1,
    \label{eq:deltatk}
\end{equation}
then, for all $i = 1,\ldots,q$ and for all $k\geq 0$: 
\begin{equation}
    \|\xiv_i(t) \|\le \epsilon + N,\qquad \forall t\in[t_k+\tau_k,t_{k+1}).
    \label{eq:claimed_bound_output}
\end{equation}
\end{thm}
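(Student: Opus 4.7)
The plan is to prove the two statements of Thm.~\ref{thm:output_bound} in order. The first inequality (\ref{eq:exp_bound}) is essentially a repackaging of Lem.~\ref{lem_bound_y}, while the uniform bound (\ref{eq:claimed_bound_output}) follows by induction on the switching index $k$, with the dwell-time lower bounds (\ref{eq:delta0})--(\ref{eq:deltatk}) engineered precisely to cancel the amplification incurred at each switch against the decay between switches.

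For the first part, I would invoke Lem.~\ref{lem_bound_y} on the interval $[t_k,t_{k+1})$ with the constant meld $\sigmav(t)=\sigmav(t_k)$, obtaining, for every $i\in\{1,\dots,q\}$, the bound $\|\bar{\yv}^d_i(t)-\bar{\yv}_i(t)\|\le L^\Theta C e^{-\alpha(t-t_k)}\|\bar{\yv}^d_{\sigmav(t_k)}(t_k)-\bar{\yv}_{\sigmav(t_k)}(t_k)\|+N$. The stacked vector $\bar{\yv}^d_{\sigmav(t_k)}(t_k)-\bar{\yv}_{\sigmav(t_k)}(t_k)$ is the vertical concatenation of the blocks $\bar{\yv}^d_r(t_k)-\bar{\yv}_r(t_k)$ for $r\in\mathcal I_{\sigmav(t_k)}$, and sub-additivity of the Euclidean norm on a partitioned vector replaces this stacked norm by the sum $\sum_{r\in\mathcal I_{\sigmav(t_k)}}\|\bar{\yv}^d_r(t_k)-\bar{\yv}_r(t_k)\|$, delivering (\ref{eq:exp_bound}).

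For the second part, I would proceed by induction on~$k$. \textbf{Base case} ($k=0$): substitute $t=t_0+\tau_0$ into (\ref{eq:exp_bound}); the choice of $\tau_0$ in (\ref{eq:delta0}) is precisely the value that makes the exponential term equal $\epsilon$, so the bound (\ref{eq:claimed_bound_output}) holds at $t=t_0+\tau_0$ and, by monotone decay of the exponential, throughout $[t_0+\tau_0,t_1)$. \textbf{Inductive step}: suppose (\ref{eq:claimed_bound_output}) holds on $[t_{k-1}+\tau_{k-1},t_k)$ for every output index. The dwell-time condition (\ref{eq:deltatk}) guarantees $t_k\ge t_{k-1}+\tau_{k-1}$, so the bound applies on a left neighborhood of $t_k$; since the state $\xv$ is continuous across the switch (only the control law is discontinuous, not the state), and each $\bar{\yv}_i$ is a smooth function of $\xv$, the bound extends to $t=t_k$, giving $\|\bar{\yv}^d_r(t_k)-\bar{\yv}_r(t_k)\|\le\epsilon+N$ for every $r$. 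Plugging this into (\ref{eq:exp_bound}) on $[t_k,t_{k+1})$ and using $|\mathcal I_{\sigmav(t_k)}|=p$ (by the square-choice/meld property) yields $\|\bar{\yv}^d_i(t)-\bar{\yv}_i(t)\|\le L^\Theta C\,p(\epsilon+N)\,e^{-\alpha(t-t_k)}+N$. The choice $\tau_k=\bar\tau$ in (\ref{eq:deltatk}) is exactly the value for which the first term equals $\epsilon$ at $t=t_k+\bar\tau$, closing the induction.

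The main obstacle is carrying the bound across the switching instant: at $t_k$ the sum in (\ref{eq:exp_bound}) is indexed by the \emph{new} set $\mathcal I_{\sigmav(t_k)}$, which can contain outputs whose error was not directly regulated on the preceding interval. The observation that rescues the argument is that Lem.~\ref{lem_bound_y}, and hence (\ref{eq:exp_bound}), applies uniformly to \emph{every} output in the deck $\Delta$ and not merely to the currently in-meld ones; the additive term $N$ precisely absorbs the mismatch between desired trajectories of the different melds under Assmpt.~\ref{assumpt_output_bounded}. Consequently the inductive hypothesis already furnishes the bound $\epsilon+N$ for the summands required by the next meld, irrespective of which meld becomes active. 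Once this uniformity is in hand, the remainder is the algebraic computation of $\tau_k$ that balances the amplification factor $L^\Theta C p$ against the exponential decay $e^{-\alpha\tau_k}$.
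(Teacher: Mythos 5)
Your proposal is correct and follows essentially the same route as the paper's proof: apply Lem.~\ref{lem_bound_y} together with sub-additivity of the norm on the stacked error vector to get \eqref{eq:exp_bound}, then induct on the switching index, choosing $\tau_0$ and $\bar\tau$ exactly so that the exponential decay cancels the amplification factor $L^\Theta C$ (resp.\ $L^\Theta C p(\epsilon+N)$) down to $\epsilon$. Your explicit remark that the bound carries across the switching instant by continuity of the state, and that Lem.~\ref{lem_bound_y} applies to \emph{all} deck outputs (not only the in-meld ones), makes precise a step the paper's proof leaves implicit, but the argument is the same.
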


\begin{proof}
    At any $t_k$, it trivially holds: 
    $$
||\overline{\yv}^d_{\sigmav(t)}(t_k)-\overline{\yv}_{\sigmav(t)}(t_k)||\leq  \sum_{a\in \mathcal{I}_{\sigmav(t)}}\|\xiv_a(t_k)\|.
    $$
    By Lemma~\ref{lem_bound_y},  $\forall t\in[t_k,t_{k+1})$,  as long as $\sigmav(t)$ is active : 
   \begin{equation}
       \begin{split}
       ||\xiv_i(t) ||&\leq L^\Theta Ce^{-\alpha (t-t_k)} ||\overline{\yv}^d_{\sigmav(t)}(t_k)-\overline{\yv}_{\sigmav(t)}(t_k)||+N \\
       &\leq L^\Theta Ce^{-\alpha (t-t_k)} \sum_{a\in \mathcal{I}_{\sigmav(t)}}\|\xiv_a(t_k)\|
+N
       \end{split}
   \end{equation}
   for $i=1,...,q$. Consider $k=0$ and let us compute how much time $\tau_0$ is needed  such that the error decreases at most  the quantity $\epsilon+N$ i.e.,
 \begin{equation}
     \begin{split}
         ||\xiv_i(t_0+\tau_0)|| &\leq L^\Theta Ce^{-\alpha \tau_0}\sum_{a\in \mathcal{I}_{\sigmav(t_0)}}\|\xiv_a(t_0)\|+N
\\ &\leq \epsilon +N  
     \end{split}
 \end{equation}
   implying i) $\tau_0$ as in \eqref{eq:delta0} and 
    ii) $\forall t\geq t_0+\tau_0$ as long as $\sigmav(t_0)$ is active: 
 \begin{align}
         ||\xiv_i(t)  || &\leq  L^\Theta Ce^{-\alpha (t-t_0)}\sum_{a\in \mathcal{I}_{\sigmav(t_0)}}\|{\xiv}_a(t_0)\|+N
\\ &\leq L^\Theta Ce^{-\alpha \tau_0}\sum_{a\in \mathcal{I}_{\sigmav(t_0)}}\|{\xiv}_a(t_0)\|+N\\ &\leq
\epsilon +N,
 \end{align}
 due to the monotonic decreasing of the exponential. 
Hence, at any switching instant $t_1\geq t_0+\tau_0$, the initial error for each output , i.e., at $t_1$,  is bounded by $\epsilon+N$ i.e.,  in the subsequent interval,  $\forall t\in[t_1,t_2 )$ we have
\begin{align}
||\xiv_i(t)  ||
&\leq L^\Theta Ce^{-\alpha (t-t_{1})}\sum_{a\in \mathcal{I}_{\sigmav(t_1)}} ||{\xiv}_a(t_1)||+N \label{eq:pre_start}\\
&\leq L^\Theta Ce^{-\alpha (t-t_{1})}\sum_{a\in \mathcal{I}_{\sigmav(t_1)}}  (\epsilon+N)+N\\
&\leq L^\Theta Ce^{-\alpha (t-t_{1})} p(\epsilon+N)+    N.
\label{eq:start_induction}
        \end{align}
    Let us compute how much time $\tau_1$ is needed  such that the error decreases at most  the quantity $\epsilon+N$ i.e.,
\begin{equation}
\begin{split}
||\xiv_i(t_1+\tau_1) || &\leq L^\Theta C e^{-\alpha \tau_1} p(\epsilon+N) +    N\\ &\leq \epsilon + N,
        \end{split}
    \end{equation}
implying i) $\tau_1= \frac{1}{\alpha}\ln\left (\frac{L^\Theta C {p (\epsilon+N)}}{\epsilon}\right)$, and ii) $\forall t\geq t_1+\tau_1$ as long as $\sigmav(t_1)$ is active: 
     \begin{align}
         ||\xiv_i(t)  || &\leq L^\Theta Ce^{-\alpha (t-t_1)}\sum_{a\in \mathcal{I}_{\sigmav(t_1)}}\|{\xiv}_a(t_1)\|+N
\\ &\leq  L^\Theta Ce^{-\alpha \tau_1}\sum_{a\in \mathcal{I}_{\sigmav(t_1)}}\|{\xiv}_a(t_1)\|+N \\ &\leq
\epsilon +N.  
     \end{align}
If at the switching instant $t_{k}$, the initial error is bounded, it can be shown  with analogous passages to \eqref{eq:pre_start}-\eqref{eq:start_induction} that \mbox{$\forall t\in[t_k+\tau_k, t_{k+1})$, it yields
 $
 ||\xiv_i(t) ||  \le \epsilon+N,
$}
with
$
\tau_k = \overline \tau =  \frac{1}{\alpha}\ln \left(\frac{L^\Theta C\, p (\epsilon+N)}{\epsilon}\right)
$, which proves the claimed bound by induction.
 \end{proof}

We are now ready to state the main result, which combines the dwell-time property of Theorem~\ref{thm:output_bound} with the assumption on the state-induced reference trajectory to guarantee uniform boundedness and invariance.

\begin{assumpt}(Reference margin)\label{assumpt:state_induced}
The reference-induced state $\chi(t)$ is well-defined and satisfies
$\chi(t)\in \mathcal{K}$ for all $t\ge t_0$. Moreover, there exists $\delta>0$
such that
\[
\{\xv\in\mathbb R^n:\|\xv-\chi(t)\|\le \delta\}\subset \mathcal{K},
\qquad \forall t\ge t_0 .
\]
\end{assumpt}
This excludes reference trajectories that approach the boundary of $\mathcal{K}$, where charts and decoupling may fail.

\begin{thm}\label{thm:state_error_from_output_with_same_dwell}(Uniform boundedness)
Consider the system $(\Sigma,\Delta)$ under the control law~\eqref{eq:control}, and a switching signal $\sigmav(t)$ taking values in $\mathcal{M}_{s}$.  Suppose 
Assumptions~\ref{assumpt:common_validity_region_lip_const}-~\ref{assumpt:state_induced}  hold  and that the initial condition satisfies
$
\xv(t_0)\in \mathcal{K} .
$
Assume the dwell-time  conditions \eqref{eq:delta0},\eqref{eq:deltatk}  hold for a chosen
$\varepsilon>0$, and define $T := \tau_0$ as in~\eqref{eq:delta0}.

Define the exit time after the transient
$$
t^* := \inf\{\, t \ge t_0 + T : \xv(t) \notin \mathcal{K}\,\}
\in (t_0+T,\infty).
$$
Then the following statements hold.
\begin{enumerate}
    \item For all $t \in [t_0+T,\, t^*)$,
$$
\|\xv(t) - \chiv(t)\| \le S,
$$
where one admissible choice of $S$ is
$$
S := \max\{S_1, S_2\},
$$
with
\begin{equation}
    S_1 := p^2 L^\Psi L^\Theta C (\varepsilon + N) + pN,
\qquad
S_2 := L^\Psi p (\varepsilon + N).
\label{eq:s_t}
\end{equation}

\item If $S \le \delta$, then $t^* = \infty$.
Consequently,
$$
\xv(t) \in \mathcal{K}, \qquad \forall t \ge t_0 + T.
$$
\end{enumerate}
\end{thm}

\begin{proof}
By definition of $t^*$, we have $\xv(t) \in \mathcal{K}$ for all
$t \in [t_0+T,\, t^*)$.
Since Assumption~\ref{assumpt:state_induced} ensures $\chi(t)\in \mathcal{K}$ for all $t \ge t_0$,
both $\xv(t)$ and $\chi(t)$ lie in $\mathcal{K}$ on this interval.
Hence the inverse charts $\Psi_{\sigma(t)}$ and cross-maps
are well defined and Lipschitz with constants $L^\Psi$ and
$L^\Theta$.

For any $t \in [t_0+T,\, t^*)$,
\begin{subequations}\label{eq:basic_reduce}
\begin{align}
\|\xv(t)-\chiv(t)\|
&=\big\|\Psi_{\sigmav(t)}(\overline{\yv}_{\sigmav(t)}(t))-\Psi_{\sigmav(t)}(\overline{\yv}^d_{\sigmav(t)}(t))\big\|
\\
&\le L^\Psi_{\sigmav(t)}\big\|\overline{\yv}_{\sigmav(t)}(t)-\overline{\yv}^d_{\sigmav(t)}(t)\big\|
\\
&\le L^\Psi \big\|\overline{\yv}_{\sigmav(t)}(t)-\overline{\yv}^d_{\sigmav(t)}(t)\big\|
\\
&\le L^\Psi \sum_{i\in \mathcal{I}_{\sigmav(t)}} \|\xiv_i(t)\|.
\end{align}
\end{subequations}

On each switching interval $[t_k,t_{k+1})$ intersected with
$[t_0+T,t^*)$, split the interval into
the early part $[t_k,t_k+\tau_k)$ and late part
$[t_k+\tau_k,t_{k+1})$.

On the late part, Theorem~\ref{thm:output_bound} yields
$\|\xi_i(t)\|\le \varepsilon+N$ for all $i=1,\ldots,q$,
which implies
$$
\|x(t)-\chi(t)\| \le L^\Psi p(\varepsilon+N) = S_2.
$$

On the early part i.e., $t\in[t_k,t_k+\tau_k)$, using Lemma~\ref{lem_bound_y},we have
\begin{equation}\label{eq:early}
\begin{aligned}
\|\xv(t)-\chiv(t)\|
&\le p L^\Psi L^\Theta C e^{-\alpha (t-t_k)}  \\
&\quad \times
\big\|\overline{\yv}^d_{\sigmav(t_k)}(t_k)
      -\overline{\yv}_{\sigmav(t_k)}(t_k)\big\|
+ pN .
\end{aligned}
\end{equation}

Moreover, by Theorem~\ref{thm:output_bound} applied on the previous interval:
\begin{equation}\label{eq:switch_mismatch}
\big\|\overline{\yv}^d_{\sigmav(t_k)}(t_k)-\overline{\yv}_{\sigmav(t_k)}(t_k)\big\|
\le p(\varepsilon+N).
\end{equation}
Combining \eqref{eq:early}--\eqref{eq:switch_mismatch} and using $e^{-\alpha (t-t_k)}\le 1$ yields $\forall t\in[t_k,t_k+\tau_k)\cap[t_0+T,t^*),$
\begin{equation}
\|\xv(t)-\chiv(t)\|
\le p^2 L^\Psi L^\Theta C(\varepsilon+N) + pN
=S_1.
\end{equation}
Therefore,
$$
\|\xv(t)-\chi(t)\| \le S
\quad
\forall t\in [t_0+T,\, t^*).
$$
If $S\le\delta$, then for all
$t\in [t_0+T,\, t^*)$,
$$
\|\xv(t)-\chi(t)\|\le S\le\delta
\quad\Rightarrow\quad
\xv(t)\in \mathcal{K}.
$$
By continuity this also holds at $t=t^*$,
contradicting the definition of $t^*$ unless
$t^*=\infty$.
Hence $\xv(t)\in \mathcal{K}$ for all $t\ge t_0+T$,
and the bound holds globally after $T$.
\end{proof}

   \paragraph{The role of $\tau_0$ and $S$} The constant  $T$ represents the worst-case transient time required for the effect of the initial condition to decay, under the first active meld,   below the level $\epsilon+N$. After time  $T$, the closed-loop system enters a regime in which all subsequent switching transients are uniformly bounded by the dwell-time condition~\eqref{eq:deltatk}, yielding the invariant tube of radius~$S$.  

\paragraph{Achieving $S\leq \delta $}
    The invariance condition 
 in Theorem~\ref{thm:state_error_from_output_with_same_dwell} is not guaranteed solely by increasing the dwell times. From~\eqref{eq:s_t}, even in the limit 
$\epsilon \to 0$ the bound 
$S$ admits a strictly positive lower limit proportional to the constant 
$N$. Therefore, a necessary condition for invariance is that 
$N$ be sufficiently small relative to the safety margin  $\delta$
. Dwell time can attenuate switching transients but cannot compensate for incompatible reference families across melds.

Finally, Theorem~\ref{thm:state_error_from_output_with_same_dwell} shows that,
under the control law~\eqref{eq:control}, any switching signal satisfying the
dwell-time conditions~\eqref{eq:delta0}--\eqref{eq:deltatk} guarantees a uniform
bound on the state error $\|x(t)-\chi(t)\|$ \emph{up to the exit time} from the safe
set $\mathcal{K}$. Moreover, if the resulting bound $S$ satisfies $S\le\delta$, then $\mathcal{K}$ is
forward invariant after the transient $T$, i.e., $x(t)\in \mathcal{K}$ for all $t\ge t_0+T$,
and the same uniform bound holds for all $t\ge t_0+T$.

\begin{figure}[t]
    \centering
\includegraphics[width=1.02\linewidth]{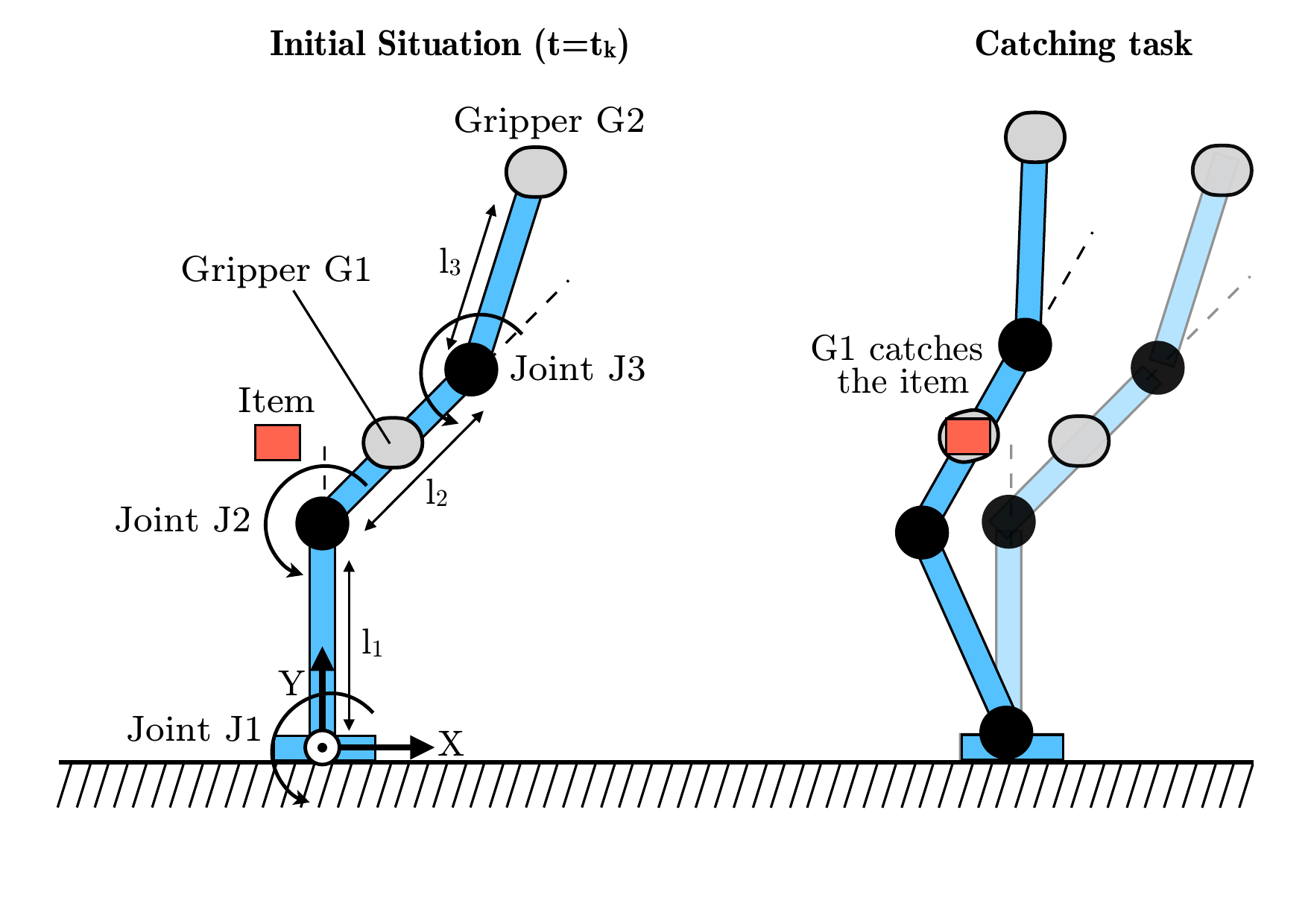}
\caption{ 3R planar manipulator task sequence. (Left) Initial configuration with joint angles and link lengths indicated. (Right) Task execution: the robot reconfigures to align the vacuum gripper $G_1$ with the target item for aspiration. The outputs selected by the active meld are $x_{G_1},y_{G_1}$ and $q_3$.}
\label{fig:3r_robot}
\end{figure}

\section{Numerical Example with a 3R Robot and a Multi-dimensional Deck of Outputs}\label{sec:rigid-body}
We consider a 3R planar manipulator placed on a table and equipped with two vacuum grippers tasked with aspirating items placed by a human  at different locations  and at different  time instants (see Figure~\ref{fig:3r_robot}).
The robot has three revolute joints, hence each joint coordinate is an angle
$q_i \in \mathbb{S}^1$, $i=1,2,3$, and the configuration space is $\mathcal{Q}=(\mathbb{S}^1)^3$.
Since all results in this work are local and hold on a compact safe set away
from singularities, we fix a local chart on $\mathcal{Q}$ and identify it with an open set
of $\mathbb{R}^3$ (i.e., we work with unwrapped angles in a neighborhood of the
operating region). With this local representation we write
$\qv=[q_1\ q_2\ q_3]^\top \in \mathbb{R}^3$ and $\dot \qv \in \mathbb{R}^3$, and define
the state $\xv=[\qv^\top\ \dot \qv^\top]^\top \in \mathbb{R}^6$.
The dynamics are written in state–space form as
\begin{equation}
    \dot{\xv} = \begin{bmatrix} \dot{\qv} \\ -{M}^{-1}(\qv)  {\Cm}(\qv,\dqv)){\dqv} \end{bmatrix}+ \begin{bmatrix}
        {0}\\
        M^{-1}(\qv)
    \end{bmatrix}\tauv
\end{equation}
where $\tauv \in \mathbb{R}^3$
 is the vector of joint torques. The inertia matrix 
$M(\qv)$ is symmetric positive definite, and $\Cm(\qv,\dqv)$ denotes the Coriolis and centrifugal terms. Gravity terms are identically zero due to horizontal planar operation. Explicit expressions for 
$M$ and $\Cm$ follow standard formulations and are omitted for brevity~\cite{FoundationRobotics}.

\begin{figure}[t]
    \centering
\includegraphics[width=1.03\linewidth]{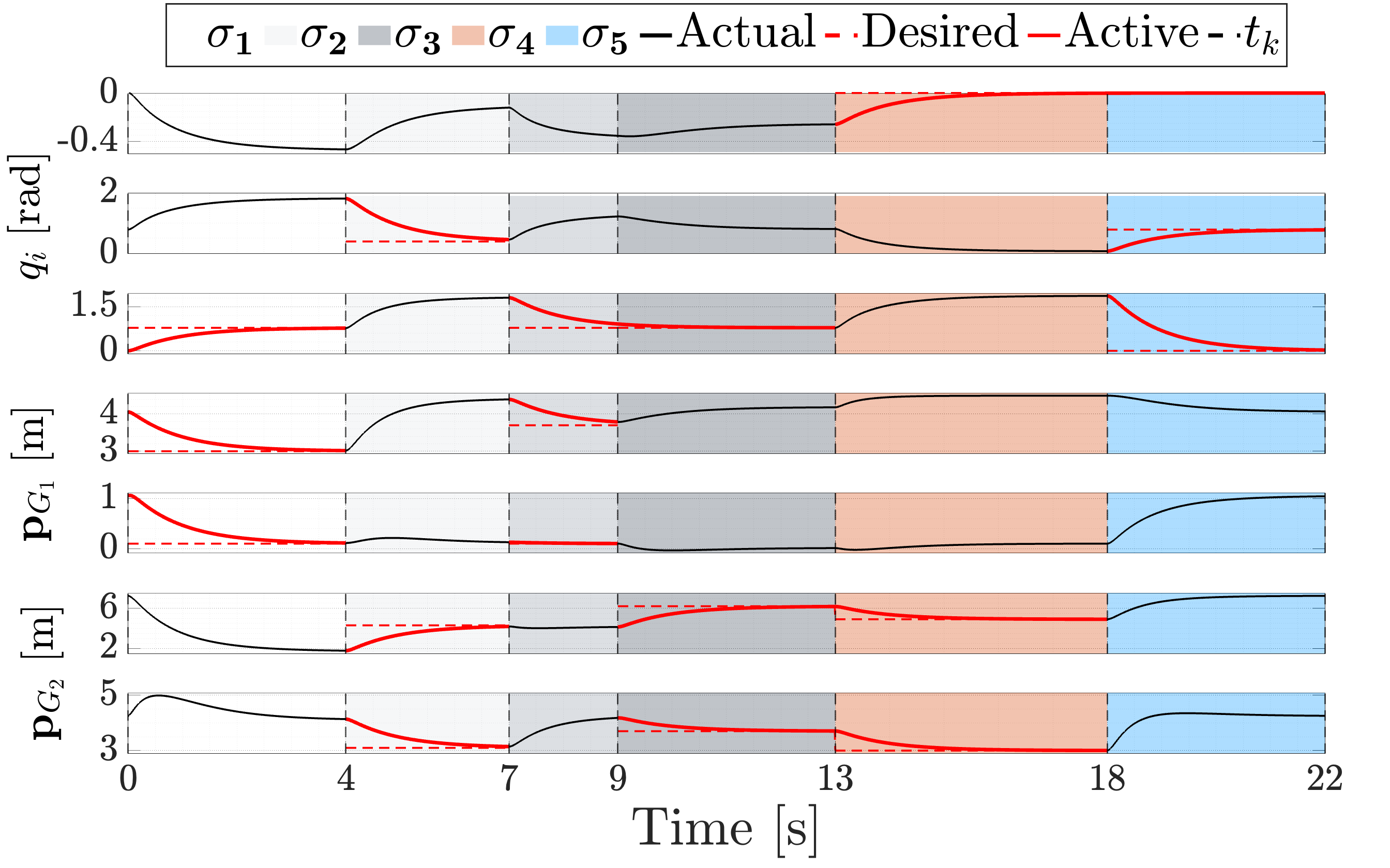}
    \caption{In red: outputs of the meld selected in each interval. The five colored regions correspond to the five melds. The reference trajectories are shown by the \emph{dashed} lines. Outputs shared by consecutive melds keep their exponential tracking across the switch.  }
    \label{fig:Sim}
\end{figure}

The deck of candidate outputs is 
\begin{equation}
\Delta= \left\{
\begin{aligned}
    y_1=q_1, y_2=q_2,y_3=q_3,\;
    y_4= x_{G_1},\;
    y_5= y_{G_1},\;\\
    y_6= x_{G_2},\;
    y_7= y_{G_2}
\end{aligned}\right\},
\end{equation}
where $(x_{G_1},y_{G_1})=:\pv_{G_1}$, and $(x_{G_2},y_{G_2})=:\pv_{G_2}$ denote the Cartesian positions of the two grippers respectively given by: 
\begin{equation}
    \begin{split}
        \pv_{G_1}&=\begin{bmatrix}
            l_1c_1+\tfrac{l_2}{2}c_{12}\\
            \;l_1s_1+\tfrac{l_2}{2}s_{12}
        \end{bmatrix}, 
     \pv_{G_2}=\begin{bmatrix}x_{G_1}+ \tfrac{l_2}{2}c_{12} +l_3c_{123} \\y_{G_1}+ \tfrac{l_2}{2}s_{12} +l_3s_{123}\end{bmatrix}, 
    \end{split}
\end{equation}
where $c_{ijk}=\cos(q_i+q_j+q_k)$, $s_{ijk}=\sin(q_i+q_j+q_k)$, and the virtual input array $\wv$ has entries $w_i$ of the form:
$$
w_i=\ddot{y}_i^{\;d}+k_i^1(\dot{y}_i^d-\dot{y}_i)+k_i^0(y_i^d-y_i), \quad k_i^j>0\;,\:\:\: i=1,...,7.
$$
The gains are identical for all outputs i.e., $k^0_i=k^1_i=15$. 

\emph{Meld schedule (indices).}
We use five compatible melds: 
$\sigmav_1=\{q_3, x_{G_1}, y_{G_1}\}$, 
$\sigmav_2=\{q_2, x_{G_1}, y_{G_1}\},\sigmav_3=\{q_2, x_{G_2}, y_{G_2}\}$, $
\sigmav_4=\{q_1, x_{G_2}, y_{G_2}\}$, and 
$\sigmav_5=\{q_1, q_2, q_3\}$,
with $p=3$. 
All outputs have relative degree two, and the considered square selections form melds with full vector relative degree on a compact set  away from kinematic singularities.


\paragraph{Description of the task }
The items are placed at times $t_1 = 4\si{s}$, $t_2 = 7\si{s}$, $t_3 = 9\si{s}$, $t_4 = 13\si{s}$, and $t_5 = 18\si{s}$.
To successfully grasp each item, the closest gripper is selected and moved above the item. This requires selecting a suitable \emph{meld}, which contains the two coordinates of the selected gripper. The third  output is any joint variable that the overall choice is a meld. 
The manipulator is initialized with joint positions and velocities
$\qv(0) = [0\;\tfrac{\pi}{4}\;0]^\top\si{rad}$ and $\dot{\qv}(0) = [0.1\;\; 0.1\;\: 0.1]^\top\si{rad/s}$. 
Figure~\ref{fig:Sim} illustrates the application of the theory presented so far. The platform operates according to 5 melds, $\sigmav_1=[\begin{smallmatrix}0\;0\;1\;1\;1\; 0\;0\end{smallmatrix}]$, $\sigmav_2=[\begin{smallmatrix}0\;1\;0\;1\;1\; 0\;0\end{smallmatrix}]$, $\sigmav_3=[\begin{smallmatrix}0\;1\;0\;\cdots 1\;1\end{smallmatrix}]$, and $\sigmav_4=[\begin{smallmatrix}1\;0\;0\;\cdots 1\;1\end{smallmatrix}]$. The task concludes by returning the platform to its initial configuration through $\sigmav_5 =[\begin{smallmatrix}1\;1\;1\;\cdots 0\end{smallmatrix}]$.
 Notably, between $7\si{s}$ and $13\si{s}$, the output $q_3$ (associated with $\sigmav_1$ and $\sigmav_3$) continues its exponential convergence to the desired value seamlessly, despite the switching at $t = 9\si{s}$. Similarly, between $13\si{s}$ and $18\si{s}$, the output $q_1$ (associated with $\sigmav_4$ and $\sigmav_5$) maintains its convergence even during the switching at $t = 18\si{s}$.
In contrast, the outputs $x_{G_2}$ and $y_{G_2}$ (as do $q_2$ and $q_3$ in the second interval) replace the previously active outputs $x_{G_1}$ and $y_{G_1}$ (or $x_{G_2}$ and $y_{G_2}$ in the second interval), which were converging toward the position of the preceding item. Each output that becomes active inherits its initial conditions from its behavior under the previous mode.
The simulation video is available at \mbox{{\small\url{https://youtu.be/JAGs4GiTIg0}}}.

\section{Discussion}


Switching among feedback-linearizing output selections differs from classical switched systems because each meld induces its own nonlinear coordinate chart. A switch therefore produces a state-dependent coordinate mismatch that cannot be neutralized by dwell time alone. The reference-compatibility constant  makes this effect explicit: dwell time attenuates transient tracking errors within a meld, but the achievable invariant tube remains lower-bounded by cross-meld reference mismatch on the common validity set. Dwell time must be chosen together with  the reference trajectories so that consecutive meld references are mutually compatible where their charts overlap. The analysis is inherently local and assumes  exact feedback linearization with full vector relative degree, possibly after dynamic input extension. As a result, operation near singularities and configurations admitting nontrivial zero dynamics are excluded by construction.  If switching occurs near singularities (loss of relative degree) or with insufficient dwell-time, boundedness may be lost (cf.~\cite{TomlinSastry1998,Liberzon2003}). The proposed dwell-time condition and the restriction to the safe set $\mathcal{K}$
prevent such loss within the prescribed operating region. Robustness to uncertainty can be handled via standard robust feedback-linearization arguments, but explicit uncertainty-dependent bounds for 
$L^\Theta, L^\Psi$, and $N$ are left for future work.
\section{Conclusions}\label{subs:conclusion}

This work analyzes switching among feedback-linearizing output selections (melds) and showed that the key difficulty is the change of coordinate charts at switching. By introducing the safe set (nonempty intersection of validity sets) and the reference-compatibility constant, and by quantifying the cross-chart effects via local Lipschitz bounds, explicit dwell-time conditions were derived that guarantee: (i) exponential decay of the active-output tracking errors between switches, (ii) seamless tracking for outputs shared by consecutive melds, and (iii) uniform boundedness and invariance within a common safe set. The resulting conditions are constructive and apply to general nonlinear systems admitting exact feedback linearization.

\textbf{Future directions.}
(1) Partial linearization and zero dynamics.
(2) Singularity avoidance using guards/barrier certificates.
(3) Robust bounds for $L^{\Psi},L^{\Theta}$ and $N$.
(4) Empirical validation  against optimization/null-space prioritization on hardware.

\bibliographystyle{plain}
\bibliography{Bib/bibAlias,Bib/bibAF,Bib/bibCustom,Bib/ref2}

\end{document}